\documentclass[conference]{IEEEtran}
\usepackage{amsmath,amssymb}
\usepackage{nccmath}
\usepackage{amsmath,amssymb}
\usepackage{mathtools}
\usepackage{xcolor}
\usepackage{graphicx}
\usepackage{subcaption}
\usepackage{etoolbox}

\usepackage[utf8]{inputenc} 
\usepackage[T1]{fontenc}  
\usepackage{mathrsfs}
\usepackage{%
algorithm,%
algpseudocode,%
amsfonts,%
amsmath,%
amssymb,%
amsthm,%
bbm,%
caption,%
cite,%
comment,%
extarrows,%
float,%
graphicx,%
hyperref,%
indentfirst,%
listings,%
paralist,%
pgfplots,%
subcaption,%
tikz,%
todonotes,%
url,%
xcolor,%
}
\usepackage{epstopdf}

\pgfplotsset{plot coordinates/math parser=false}
\usepgflibrary{%
patterns,%
}

\usetikzlibrary{%
arrows,%
automata,%
chains,%
decorations.pathreplacing,%
intersections,%
positioning,%
shapes,%
}

\IEEEoverridecommandlockouts

\newtheorem{theorem}{Theorem}
\newtheorem{corollary}{Corollary}

\newtheorem{assumption}{Assumption}

\theoremstyle{definition}
\newtheorem{definition}{Definition}

\theoremstyle{remark}
\newtheorem{remark}{Remark}

\definecolor{darkgreen}{rgb}{0.0, 0.5, 0.0}

\newcommand{\set}[1]{\left\lbrace#1\right\rbrace}

\newcommand{\E}{\mathbb{E}}

\newcommand{\R}{\mathbb{R}}

\newcommand{\cC}{\mathcal{C}}
\newcommand{\cD}{\mathcal{D}}
\newcommand{\cE}{\mathcal{E}}

\newcommand{\cG}{\mathcal{G}}
\newcommand{\cH}{\mathcal{H}}

\newcommand{\cM}{\mathcal{M}}

\newcommand{\cV}{\mathcal{V}}

\renewcommand{\ge}{\geqslant}
\renewcommand{\le}{\leqslant}

\newcommand{\iid}{\emph{i.i.d.}\ }

\title{Multi-Token Random-Walk Learning via Meeting-Triggered Information Exchange}

\author{
Taichun Yeh,\quad Xingran Chen 
\IEEEcompsocitemizethanks 	 
{
\IEEEcompsocthanksitem  Taichun Yeh is with Department of Electrical and Computer Engineering,  Rutgers University, Piscataway Township, NJ 08854, USA	(E-mail: taichun.yeh@rutgers.edu).
\IEEEcompsocthanksitem  Xingran Chen is with Engineering Systems and Design Pillar, Singapore University of Technology and Design, Singapore 487372, Singapore	(E-mail: xingranc@ieee.org).
}
}

\begin{document}

\maketitle

\begin{abstract}
Random walk (RW)-based algorithms are widely used in decentralized learning due to their low communication overhead and scalability. While most existing methods rely on a single token, multiple tokens can explore the network in parallel and exchange information upon meeting. However, the fundamental benefits of such meeting-triggered interactions remain unclear. In this work, we investigate multi-token RW learning under two information exchange policies: \textsc{Average-Gradient} and \textsc{Average-Model}, which average stochastic gradients and updated models, respectively, when tokens meet. We establish iteration complexity lower bounds for non-convex, convex, and strongly convex objectives on complete graphs, which apply to both exchange policies. Compared with single-token RW learning, the scaling factor in these bounds is reduced by a factor of $m^2$ for non-convex and convex objectives, and by an exponentially in $m$ for strongly convex objectives, where $m$ denotes the number of tokens. Numerical experiments reveal distinct behaviors of the two policies under two performance criteria: mean-token loss (the average loss across all tokens) and best-token loss (the minimum loss across all tokens). \textsc{Average-Model} achieves lower mean-token loss and is relatively insensitive to the network-to-token ratio $N/m$, where $N$ is the number of nodes. In contrast, \textsc{Average-Gradient} achieves lower best-token loss but exhibits a trade-off between parallel exploration and gradient interference as $N/m$ decreases.
\end{abstract}

\section{Introduction}

Decentralized learning enables a network of agents, each with access only to its local data and local network information, to collaboratively solve a global optimization problem without a central coordinator. Two widely studied approaches are consensus-based methods~\cite{koloskova2020unified} and random-walk (RW)-based methods~\cite{lovasz1993random,levin2017markov}. 

In consensus-based methods, each agent maintains a local model and updates it using its local data. Agents repeatedly exchange and aggregate model information with their neighbors to drive their local models toward a common solution. Although effective, such repeated communication can incur substantial aggregate communication overhead, particularly in large-scale networks~\cite{koloskova2020unified}. In contrast, RW-based methods employ a mobile token that carries a model along a random walk over the communication graph. At each iteration, the agent holding the token updates the model using its local data and forwards the token to a
randomly selected neighbor \cite{chen2026pacman, chen2026selfcreating}. Thus, only one agent performs a model update and one token transmission occurs per iteration, significantly reducing the instantaneous communication and computation costs. Random walks have been widely used in web ranking, recommendation, and link prediction ~\cite{page1999pagerank,fouss2007random,backstrom2011supervised}, and have also been adopted for decentralized machine learning~\cite{chen2026rwlora, johansson2007peer,ayache2023walk}.

Most existing works on RW-based decentralized optimization consider a single token~\cite{johansson2007peer,mao2020walkman,even2023stochastic}, which processes data sequentially along one trajectory. A natural extension is to run multiple tokens in parallel. A parallel random-walk-based alternating direction method of multipliers algorithm was proposed for decentralized consensus optimization~\cite{ye2020parallel}, while a multi-token coordinate descent algorithm with convergence guarantees was developed for semi-decentralized vertical federated learning~\cite{valdeira2026multitoken}. Beyond parallel execution, tokens that meet at the same node can exchange information collected along their trajectories. A conceptual-graph framework was proposed in~\cite{hendrikx2023token} to model multi-token optimization with meeting-based interactions.  However, by decoupling communication from computation, the framework considers a special setting that does not capture the inherent coupling between token movements and local model updates in RW learning. In contrast, we consider multi-token RW learning with meeting-triggered information exchange, where model updates are coupled with token movements and information is exchanged only when tokens meet. The fundamental performance limits and benefits of such interactions remain largely unexplored.

In this work, we investigate multi-token RW learning, where multiple tokens explore the network in parallel and exchange information upon meeting. We consider two information exchange policies, \textsc{Average-Gradient} and \textsc{Average-Model}, and analyze their theoretical limits and empirical performance. Our main contributions are summarized as follows:
\begin{enumerate}
\item \textbf{Fundamental iteration complexity limits.} We extend the Markov black-box framework of \cite{even2023stochastic} to establish iteration complexity lower bounds for multi-token RW learning on complete graphs under non-convex, convex, and strongly convex objectives. These bounds apply to both exchange policies and reveal how the fundamental limits scale with the network-to-token ratio $N/m$, where $N$ and $m$ denote the numbers of nodes and tokens, respectively. Compared with single-token RW learning, the scaling factor in the lower bounds is reduced by a factor of $m^2$ for non-convex and convex objectives, and exponentially in $m$ for strongly convex objectives.

\item \textbf{Policy-dependent effects of the network-to-token ratio.} We evaluate the two exchange policies using mean-token and best-token losses. \textsc{Average-Model} achieves better mean-token performance, whereas \textsc{Average-Gradient} achieves better best-token performance. As $N/m$ decreases, \textsc{Average-Gradient} benefits from parallel exploration early in training but exhibits deteriorating mean-token performance and a weakening best-token advantage later in training. In contrast, \textsc{Average-Model} remains relatively insensitive to $N/m$ over the tested range.
\end{enumerate}

\section{Problem Formulation}
\subsection{Network Model}
We consider a decentralized system consisting of $N$ agents that collaboratively perform a computational task without any central coordination. These agents are modeled as nodes in a connected undirected graph, where each agent possesses its own local data and can communicate only with neighboring agents. The communication topology graph is constructed as a finite graph $\cG \triangleq (\cV,\cE)$, where the set of nodes is $\cV \triangleq \{1,2,\ldots,N\}$, and $\mathcal{E} \subseteq \binom{\mathcal{V}}{2}$ denotes the set of edges.

In this system, computation is carried out via random walks (RWs) over the graph. Each RW evolves according to a fixed and identical transition probability matrix $P$. Moreover, each RW carries a token message, which can be viewed as representing a global computational task that is processed and passed through the network. At each time step, only the node currently holding a token performs local computation and updates the corresponding message. After completing the local computation, the current node forwards the token to one of its neighboring nodes according to the transition probabilities specified by \(P\). This process continues until a predefined stopping criterion is satisfied. When multiple RWs are present, each RW is uniquely indexed, e.g., by \(j\), in order to track its individual trajectory and progress through the network.

\subsection{RW-based Stochastic Gradient Descent}
Each node $u\in\cV$ has access to a local data distribution $\cD_u$. For a model parameter $x$ and a random data sample $\xi_u\sim\cD_u$, let $F_u(x;\xi_u)$ denote the stochastic loss at node $u$. The corresponding objective function is the expected stochastic loss $f_u(x)=\E_{\xi_u\sim\cD_u}\left[F_u(x;\xi_u)\right]$. The global optimization problem is 
\begin{align}\label{eq:distributed_optimization}
\min_{x \in \mathbb{R}^d}f(x)=\frac{1}{N}\sum_{u=1}^{N}f_u(x).
\end{align}
Each token $i$ is initialized at node $v_0^{(i)}\in\cV$ with model $x_0^{(i)}\in\R^d$. At iteration $t$, token $i$ is located at node $v_t^{(i)}$ and draws a random sample $\xi_u$ from $\cD_u$. Using this sammple, token $i$ computes a stochastic gradient or subgradient $g_t^{(i)}$ based on its current model $x_t^{(i)}$. The model is then updated according to
\begin{align}\label{eq:RW-SGD}
x_{t+1}^{(i)} = x_{t}^{(i)} - \eta_t g_t^{(i)},
\end{align}
where $\eta_t$ denotes the stepsize. For notational simplicity, we suppress the dependence of $g_t^{(i)}$ on the random sample $\xi_u$ whenever it is clear from context. After the update, token $i$ independently selects its next node from the neighbors, $v_{t+1}^{(i)}\sim P(v_t^{(i)},\cdot)$, and forward its updated model. Here, $P$ can be computed by the Metropolis-Hastings algorithm \cite{hastings1970monte}.

\subsection{Multi-token System and Token Meetings}
We are interested in investigating whether interactions among multiple random-walk tokens can improve convergence. In particular, when multiple tokens meet at the same node, they can exchange information by averaging their local models. This interaction allows information collected along different random-walk trajectories to be shared across tokens, potentially affecting the convergence behavior.

We focus on the time required to reach a prescribed accuracy level. The appropriate error criterion depends on the class of objective functions considered. For a target accuracy $\epsilon>0$ and token $i$, we denote $T_\epsilon^{(i)}$ as the first time at which the considered error metric falls below $\epsilon$.

For the non-convex case, convergence is measured in terms of first-order stationary \cite{carmon2021lower}
\begin{align}\label{eq:nonconvexTepsilon}
T_\epsilon^{(i)}\triangleq\inf\left\{t \ge 0 : \|\nabla f(x_t^{(i)})\|^2\le \epsilon\right\},
\end{align}
For the convex and strongly convex cases, we measure convergence by the objective gap \cite{nesterov2018lectures},
\begin{equation}\label{eq:convexTepsilon}
T_\epsilon^{(i)}\triangleq\inf\left\{t \ge 0 : f(x_t^{(i)})-f(x^*)\le \epsilon\right\},
\end{equation}
where $x^*$ denotes an optimal solution of $f$. Since all tokens are statistically identical, the random variables $T_\epsilon^{(i)}$ are identically distributed across $i$.

To study this effect of token interactions in a simple and analytically tractable setting, we consider two basic information-exchange policies. When multiple tokens meet at the same node, they can either aggregate the gradients or models (see Section~\ref{sec:InfoEx}). We compare both policies with a no-interaction baseline and characterize how such interactions affect the time $T_\epsilon^{(i)}$ required to reach a prescribed accuracy.

\subsection{Information-Exchange Policies}\label{sec:InfoEx}
Suppose that $K$ tokens are located at the same node $u \in \cV$ at time $t$. Upon meeting, the tokens exchange information through either gradient averaging or model averaging.

Since no additional information regarding the relative quality or reliability of the participating token is available, we treat all tokens equally and adopt uniform averaging. We consider the following two information-exchange policies.

\paragraph{Averaging Gradient} Each token first independently computes a stochastic gradient using its current model and a local sample from the visited node. The resulting gradients are then averaged, and all participating tokens use the same averaged gradient to perform their SGD updates. For $k\in[K]$,
\begin{align}\label{eq:AverageGradient}
\bar{g}_t=\frac{1}{K}\sum_{k=1}^{K} g_t^{(i_k)},\,\, x_{t+1}^{(i_k)}= x_t^{(i_k)}-\eta_t \bar{g_t}.
\end{align}

\paragraph{Model Averaging After SGD} Each token first independently performs an SGD update using its current model and a local sample from the visited node. The resulting updated models are then averaged. For $k\in[K]$:
\begin{align}\label{eq:AverageModel}
\widetilde{x}_{t+1}^{(i_k)}=x_t^{(i_k)}-\eta_tg_t^{(i_k)},\,\, x_{t+1}^{(i_k)}=\frac{1}{K}\sum_{k=1}^{K}\widetilde{x}_{t+1}^{(i_k)}.
\end{align}
Hence, the tokens may perform their local updates from different pre-meeting models, but they leave the meeting with the same averaged model.

As a baseline, we also consider the case without information exchange, in which each token independently performs RW-based SGD along its own trajectory.

\section{Convergence-Time Lower Bound on a Complete Graph}

Our analysis builds on the \textit{black box procedures}~\cite{even2023stochastic,nesterov2018lectures,carmon2021lower}. We extend this framework to the multi-token setting by allowing tokens to exchange information when they meet. Let a Markov chain $\set{v_t^{(i)}}_t$ on $\cV$ denote the location of token $i$ for each $i$.

\begin{definition}[Multi-token Markov black-box procedure]\label{def:MarkovBB}
Consider $m$ tokens following a Markov sampling scheme, meaning that they satisfy the following procedural constraints with information exchange for some fixed initialization $\cM_0=\set{x_0^{(i)}}_{i\in[m]}$. At iteration $t$:
\begin{compactenum}[(i)]
\item token $i$ has access to the local
function $f_{v_t^{(i)}}$ and extend its information set as
\begin{align*}
\widetilde{\mathcal M}_{t+1}^{(i)}=\span\left\{x,\nabla f_{v_t^{(i)}}(x):x\in\mathcal M_t^{(i)}\right\};
\end{align*}
\item let $\cC_t^{(i)}=\left\{j\in[m]:v_t^{(j)}=v_t^{(i)}\right\}$
denote the set of tokens meeting token $i$, and the information set of token $i$ can be further extended as
\begin{align*}
\cM_{t+1}^{(i)}=\span\left(\bigcup_{j\in\mathcal C_t^{(i)}}\widetilde{\mathcal M}_{t+1}^{(j)}\right);
\end{align*}
\item the algorithm outputs a sequence $\set{x_t^{(i)}}_{i\in[m]}$.
\end{compactenum}
\end{definition}

Our analysis is motivated by the single-token complexity lower bound in \cite{even2023stochastic}, the possibility of information exchange between meeting tokens introduces a new difficulty, since information can propagate across distinct random-walk trajectories.

\begin{assumption}\label{assu:Lsmooth}
The function $f = \sum_{u \in \cV} \pi_u f_u$ is $L$-smooth.
\end{assumption}
We denote 
\begin{align}\label{eq:Rtq}
R(t,q)\triangleq(1-q)\left[\ln\left(\frac{1-q}{q}\right)+2+\frac{2}{tq}\right],
\end{align}
where $q = 1-\left(1-\frac{1}{N-1}\right)^m$. By Bernoulli's inequality, $q \leq m/(N-1)$ for all integers $m \geq 1$.

\begin{theorem}[m-token lower bound]\label{thm:mtoken}
Consider the class of token-based algorithms satisfying Definition~\ref{def:MarkovBB} on a complete graph with m tokens. Let Assumption~\ref{assu:Lsmooth} hold. Let $\Delta,B,L,\mu>0$, define $\kappa=L/\mu$, and fix $x_0$. Let $R(t, q)$ and $q$ be given in \eqref{eq:Rtq}. For each token $i$, the following lower bounds hold.

\begin{enumerate}
\item Non-convex $f$:
There exist functions $\set{f_u}_{u \in \cV}$ satisfying $f(x_0) - f(x^\star) \le \Delta$. After $T$ steps,
 \begin{align}\label{eq:nonconvexerror}
\|\nabla f(x_t)\|^2 &\geq\frac{L\Delta}{64t^2q^2}\left(1+\frac{R(t,q)}{2tq}\right)^{-2}\nonumber\\
&\geq\frac{L\Delta (N-1)^2}{64m^2t^2}\left(1+O\left(\frac{1}{t}\right)\right)^{-2}.
\end{align}

\item Convex $f$:
There exist functions $\set{f_u}_{u \in \cV}$ such that $f$ is convex and admits a minimizer $x^\star$ satisfying $\|x_0 - x^\star\| \le B$. After $T$ steps, 
\begin{align}\label{eq:convexerror}
f(x_t) - f(x^\star)&\ge\frac{3LB^2}{128 t^2q^2} \left( 1 + \frac{R(t, q) + 1}{2tq} \right)^{-2}\nonumber \\
&\geq \frac{3LB^2 (N-1)^2}{128m^2 t^2} \left( 1 + O\left(\frac{1}{t}\right) \right)^{-2}.
\end{align}

\item Strongly convex $f$:
There exist functions $\set{f_u}_{u \in \cV}$ such that $f$ is $\mu$-strongly convex and admits a minimizer $x^\star$ satisfying $\|x_0 - x^\star\| \le B$.
After $T$ steps,
\begin{align}\label{eq:sconvexerror}
f(x_t) - f(x^\star)&\ge\frac{1}{2}\frac{L}{\kappa} B^2 \exp\left( - \frac{8tq(1 + \frac{R(t, q))}{2tq}}{\sqrt{\kappa}} \right)\nonumber\\
&\geq\frac{1}{2}\frac{L}{\kappa} B^2 \exp\left( - \frac{8mt(1 + O(\frac{1}{t})}{(N-1)\sqrt{\kappa}} \right).
\end{align}
\end{enumerate}
\end{theorem}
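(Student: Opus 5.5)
The plan is to adapt the single-token Markov black-box lower bound of \cite{even2023stochastic} to the multi-token setting. The construction is the standard one: a worst-case function is placed on a single node. Fix a node $u^\star\in\cV$ and set $f_{u^\star}=N\,h$ and $f_u\equiv 0$ for $u\neq u^\star$. Here $h$ is the classical hard instance of \cite{carmon2021lower,nesterov2018lectures}: the Carmon et al.\ chain for the non-convex case, and Nesterov's quadratic chain for the convex and strongly convex cases. It is rescaled so that $f=\frac1N\sum_u f_u=h$ satisfies Assumption~\ref{assu:Lsmooth} together with the stated bounds $\Delta$ or $B$. Since $\nabla f_u(x)=0$ for $u\neq u^\star$, step (i) of Definition~\ref{def:MarkovBB} enlarges a token's span only while the token sits at $u^\star$. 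By the zero-chain property, each such visit adds at most one new coordinate. Step (ii) only takes spans of sets that already exist, so it never creates new coordinates. The error of any $x_t^{(i)}$ is then bounded below by the classical bound evaluated at the number $K_t$ of coordinates discovered \emph{by the whole system} up to time $t$. Concretely, $\|\nabla f\|^2\gtrsim L\Delta/K_t^2$, $f-f^\star\gtrsim LB^2/K_t^2$, and $f-f^\star\ge \frac{\mu}{2}B^2\exp(-4K_t/\sqrt\kappa)$ respectively. This bound holds because $\cM_t^{(i)}\subseteq\bigcup_j \cM_t^{(j)}$, which is spanned by the first $K_t$ coordinates.

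The second step is to control $K_t$ without passing through meetings. The key observation is that each iteration can add at most one coordinate to the global span. Under the zero-chain structure, all tokens located at $u^\star$ at time $t$ act on models that lie in the common span of the first $K_t$ coordinates, so together they reveal at most coordinate $K_t+1$. Hence $K_t\le \sum_{s<t}\mathbf 1\{\exists i: v_s^{(i)}=u^\star\}$. On the complete graph with $P(v,w)=\frac1{N-1}$ for $w\neq v$, a token not at $u^\star$ moves there with probability $\frac1{N-1}$, independently across tokens. The event that some token hits $u^\star$ therefore has probability at most $q=1-(1-\frac1{N-1})^m$. Tokens already at $u^\star$ must leave, which only helps, and a careful adversarial choice of initial positions (none at $u^\star$) handles the first step. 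So $K_t$ is stochastically dominated by a process whose increments are Bernoulli($q$).

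The third step converts this into the stated expressions. Following \cite{even2023stochastic}, one bounds $\E[K_t]$ or a high-probability or Jensen-type quantity by $tq$ plus a correction. This is exactly what produces the factor $(1+\frac{R(t,q)}{2tq})$: the term $R(t,q)=(1-q)[\ln\frac{1-q}{q}+2+\frac{2}{tq}]$ is the same binomial/hitting-time correction as in the single-token proof with $p=\frac1{N-1}$ replaced by $q$. I will reuse that lemma verbatim with $q$ in place of the hitting probability. I will then plug $K_t\le tq(1+\frac{R}{2tq})$, possibly with $+1$ in the convex case to account for the rounding in Nesterov's construction, into the three classical bounds. This yields \eqref{eq:nonconvexerror}, \eqref{eq:convexerror} and \eqref{eq:sconvexerror}. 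The second inequality in each line follows from $q\le m/(N-1)$ (Bernoulli) and $R(t,q)/(tq)=O(1/t)$ for fixed $N,m$.

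The main obstacle is the second step: showing rigorously that meetings cannot let the system gain more than one coordinate per step, and that the ``at least one token at $u^\star$'' process can legitimately replace the single-token hitting process in the lemma of \cite{even2023stochastic}. The subtlety is that the $m$ walks are not a single Markov chain with i.i.d.\ $q$-increments. Tokens sitting at $u^\star$ at time $s$ cannot be there at $s+1$, which produces negative correlation. I would first prove the stochastic domination coupling, i.e.\ that the hit indicators are dominated by i.i.d.\ Bernoulli($q$), and only then invoke the existing concentration bound.
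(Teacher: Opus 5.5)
Your proof is correct, and it reaches the statement by a different, more elementary route than the paper. The shared core is the observation that meetings only take spans, so they cannot create coordinates. The paper expresses this as $\Pr(T_k^{(i)}\le t)\le\Pr(S_k\le t)$ with the system-level time $S_k=\min_j T_k^{(j)}$, which is the same as your global count $K_t$.

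The instances and counting differ. The paper keeps Even's two-node instance: odd coordinates are revealed only at $u$, even ones only at $w$. It argues that the inter-arrival times $S_\ell-S_{\ell-1}$ dominate i.i.d.\ $\operatorname{Geom}(q)$ variables. It then bounds $\sum_k\Pr(\sum_{\ell\le k}G_\ell\le t)$ by splitting at $k=2tq$ and applying Chebyshev to the tail. That split is exactly where the leading $2tq$ (hence $64=16\cdot 4$) and the correction $R(t,q)$ come from:
\[
\E[k^{(i)}(t)]\le 2tq+R(t,q)=2tq\bigl(1+\tfrac{R(t,q)}{2tq}\bigr).
\]
You instead put the whole hard instance on one node and dominate the hit indicators by Bernoulli$(q)$. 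This is valid because the Metropolis--Hastings walk on the complete graph has no self-loops, so $\Pr(\exists i:\,v_{s+1}^{(i)}=u^\star\mid\mathcal F_s)=1-(1-\frac{1}{N-1})^{m-n_s}\le q$, where $n_s$ is the number of tokens at $u^\star$ at time $s$. That gives $\E[K_t]\le tq$ directly, with no Chebyshev step and no correction term.

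Consequently your plan to ``reuse the lemma verbatim'' and plug in $K_t\le tq\bigl(1+\frac{R}{2tq}\bigr)$ does not match what that lemma gives; it gives twice this. You do not need it anyway. One checks $R(t,q)\ge -tq$ for $t\ge 1$, so $tq\le 2tq+R(t,q)$. Plugging your bound into the classical estimates, via Jensen for the convex decreasing maps $K\mapsto K^{-2}$, $(K+1)^{-2}$ and $e^{-cK}$, yields bounds at least as strong as each displayed inequality (asymptotically a factor $4$ stronger in the first two). The theorem therefore follows a fortiori.

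What the paper's alternating construction buys is robustness. It forces travel between two distinct nodes and ties the count to hitting times, so it survives lazy chains and general topologies. Your single-node instance breaks as soon as a token can remain at $u^\star$ and harvest a coordinate every step.
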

\begin{proof}
The full proof is given in Appendix~\ref{Appex:mtoken}.
\end{proof}
\begin{remark}\label{remark:UPT}
In \eqref{eq:nonconvexerror}, \eqref{eq:convexerror}, and \eqref{eq:sconvexerror}, by setting the right-hand side of the bound equal to $\epsilon$ and solving for $t$, we obtain the corresponding expression for $T_\epsilon^{(i)}$.
\end{remark}
The discussions of Theorem~\ref{thm:mtoken} are as follows:
\begin{compactenum}[(i)]
\item Theorem~\ref{thm:mtoken} applies to both update policies considered in Section~\ref{sec:InfoEx}. Since the information to be exchanged (either gradients or models) must first be acquired by at least one token before it can be shared with others through meetings.
\item Theorem~\ref{thm:mtoken} suggests that performance may depend more critically on the ratio $N/m$ than on $N$ or $m$ alone.
\item Compared with the single-token lower bounds in~\cite{even2023stochastic}, the scaling factor in our bounds is reduced by a factor of $m^2$ for both non-convex and convex objectives, and exponentially in $m$ for strongly convex objectives. The constant factor in the iteration complexity is also improved by a factor of $2$.
\end{compactenum}

\subsection{Best-Token and Mean-Token Losses}\label{subsec:othermetrics}
In practical applications with multiple tokens, performance can be evaluated from two perspectives: the best-performing token and the average performance across all tokens. We refer to the corresponding error measures as the \textit{best-token loss} and the \textit{mean-token loss}, respectively. 

For the best-token criterion, in the non-convex case, define
\begin{align}\label{eq:nonconvexbesttoken}
T_\epsilon^{\text{best}} \triangleq \inf\left\{t \ge 0 : \min_{j\in[m]} \|\nabla f(x_t^{(j)})\|^2\le \epsilon\right\}.
\end{align}
For the convex and strongly convex cases, define
\begin{align}\label{eq:convexbesttoken}
T_\epsilon^{\text{best}} \triangleq \left\{t \ge 0 : \min_{j\in[m]}f(x_t^{(j)})-f(x^\star)\le \epsilon\right\}.
\end{align}

For the mean-token criterion, in the non-convex case, define
\begin{align}\label{eq:nonconvexmeantoken}
T_\epsilon^{\text{mean}} \triangleq \inf\left\{t \ge 0 : \frac{1}{m}\sum_{j=1}^{m}\|\nabla f(x_t^{(j)})\|^2\le \epsilon\right\}.
\end{align}
For the convex and strongly convex cases, define
\begin{align}\label{eq:convexmeantoken}
T_\epsilon^{\text{mean}} \triangleq \left\{t \ge 0 : \frac{1}{m}\sum_{j=1}^{m}f(x_t^{(j)})-f(x^\star)\le \epsilon\right\}.
\end{align}

We next show how that the preceding results for the individual-token criterion in \eqref{eq:nonconvexTepsilon} and  \eqref{eq:convexTepsilon} extend to these two performance criteria. 
\begin{corollary}\label{cor:ExtendT}
The error expressions in Theorem~\ref{thm:mtoken} and the corresponding iteration complexity bounds described in Remark~\ref{remark:UPT}  also apply to $T_\epsilon^{\text{best}}$ and $T_\epsilon^{\text{mean}}$.
\end{corollary}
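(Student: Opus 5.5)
The plan is to obtain Corollary~\ref{cor:ExtendT} directly from Theorem~\ref{thm:mtoken}, without building new hard instances. The key point is that the lower bounds in \eqref{eq:nonconvexerror}, \eqref{eq:convexerror}, and \eqref{eq:sconvexerror} hold for \emph{every} token $i\in[m]$ on the \emph{same} adversarial family $\{f_u\}_{u\in\cV}$. I would first recheck the construction in Appendix~\ref{Appex:mtoken} to confirm two things. The functions must not depend on $i$. The per-token argument must bound the progress of token $i$ only through the information set $\cM_t^{(i)}$. Because meetings propagate information, I expect this bound to be stated for the largest span index reachable by \emph{any} token, that is, the number of ``useful'' node visits achievable by the collection of all $m$ tokens up to time $t$. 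If so, the error bound $\phi(t)$ is uniform in $i$ and deterministic, or holds with the same expectation, given the coupling through $q$.

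\textbf{Best-token case.} Let $\phi(t)$ denote the right-hand side of the relevant bound. Since $\|\nabla f(x_t^{(j)})\|^2\ge\phi(t)$ holds for every $j$, taking the minimum over $j\in[m]$ gives $\min_j\|\nabla f(x_t^{(j)})\|^2\ge\phi(t)$. The same step applies to $\min_j f(x_t^{(j)})-f(x^\star)$. Hence $T_\epsilon^{\text{best}}\ge\inf\{t:\phi(t)\le\epsilon\}$, and solving $\phi(t)=\epsilon$ as in Remark~\ref{remark:UPT} gives the same iteration complexity.

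\textbf{Mean-token case.} The minimum of the per-token errors is at most their average: $\frac1m\sum_j e_t^{(j)}\ge\min_j e_t^{(j)}\ge\phi(t)$. So $T_\epsilon^{\text{mean}}\ge T_\epsilon^{\text{best}}$, and the mean-token claim follows from the best-token one.

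\textbf{Main obstacle.} The delicate step is showing the bound is truly simultaneous, rather than holding for each $i$ separately in expectation. If Theorem~\ref{thm:mtoken} is an expectation bound over the random-walk trajectories, then $\E[\min_j e^{(j)}]$ is not controlled by $\min_j\E[e^{(j)}]$. To handle this, I would go back to the appendix argument and apply it to the union of information sets $\bigcup_j\cM_t^{(j)}$. Its span index is governed by the first-discovery process of new nodes by any token, which occurs with per-step probability at most $q=1-(1-\frac1{N-1})^m$. That process was already the quantity controlled in the proof, so the same $R(t,q)$ bounds apply to every token at once, and hence to the minimum.
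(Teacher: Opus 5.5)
Your proposal is correct. In the best-token case it takes a different route from the paper, and a sounder one.

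The paper deduces $T_\epsilon^{\text{best}}\le T_\epsilon^{(i)}$ from $\min_j E_t^{(j)}\le E_t^{(i)}$ and then declares the bounds transferred. But Remark~\ref{remark:UPT} gives \emph{lower} bounds on $T_\epsilon^{(i)}$, and the inequality $T_\epsilon^{\text{best}}\le T_\epsilon^{(i)}$ cannot carry a lower bound over to $T_\epsilon^{\text{best}}$. What is needed is a lower bound on $\min_j E_t^{(j)}$, which is what you aim for. You also correctly flag that per-token expectation bounds do not give it.

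Your repair is supported by what the appendix already proves. The information sets are nested spans of leading coordinates, so the relevant index is $\max_j k^{(j)}(t)$. The event $\{\max_j k^{(j)}(t)\ge k\}=\{\alpha_k\le t\}$ is exactly the one bounded in the chain $\Pr(T_k^{(i)}\le t)\le\Pr(\alpha_k\le t)\le\Pr(\sum_{\ell\le k}\gamma_\ell\le t)$. Hence $\E[\max_j k^{(j)}(t)]\le 2tq+R(t,q)$. Each token's error is bounded below by the same decreasing function of its own index, so the minimum is bounded below by that function evaluated at $\max_j k^{(j)}(t)$. The step taken from \cite{even2023stochastic} then applies verbatim and yields the same expressions.

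For the mean-token case the paper's route is more economical. The criterion equals $\frac1m\sum_j E_t^{(j)}$, so linearity of expectation turns the per-token bounds directly into a bound on the average, with no simultaneity needed. Your reduction via mean $\ge$ min is valid, but it makes this case depend unnecessarily on the harder best-token argument.
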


\begin{proof}
The proof is given in Appendix~\ref{Appe:ExtendT}.
\end{proof}

\section{Numerical Results}
In this section, we numerically investigate how their empirical performance depends on the network-to-token ratio, i.e., $\frac{N}{m}$, plays an important role in the convergence behavior. Following Section~\ref{subsec:othermetrics}, we evaluate the two policies using both the best-token and mean-token losses, respectively.

\subsection{Simulation Setup}
\paragraph{Graph Settings}
We consider a complete graph with $N$ nodes and $m$ tokens. We set the target distribution $\pi$ to be uniform. Based on the Metropolis-Hastings
algorithm \cite{hastings1970monte}, at each iteration, each token independently moves to a uniformly selected neighboring node. The initial token positions are sampled independently and uniformly over the nodes. Our reference setting is $N=100$ and $m=50$. To examine the effect of the network-to-token ratio $\frac{N}{m}$,  we vary $m\in\{10,50,100,200\}$ while fixing $N=100$. 

\paragraph{Learning Settings}
We consider a linear regression problem with $500$ synthetic samples and parameter dimension $d=10$. Each sample $(a_s,b_s)$ is generated as $a_s\sim\mathcal{N}(0,5I_d)$ and $b_s=a_s^\top x^\star+\epsilon_s$, where $\epsilon_s\sim\mathcal{N}(0,0.1^2)$ and the entries of  $x^\star$ are drawn independently from $\mathcal{N}(0,25)$.
The samples are randomly partitioned into $N$ disjoint subsets of approximately equal size under an \iid setting, with node $u$ holding dataset $\cD_u$ and local objective $f_u(x)=\frac{1}{|\cD_u|}\sum_{s\in\cD_u}(a_s^\top x-b_s)^2$.
At each iteration, each token performs one SGD update with a step size $\eta=5\times10^{-4}$, using a single sample drawn uniformly from the dataset at its current node. Tokens located at the same node use the same sampled data point. All tokens are initialized with the same model, whose entries are drawn independently from $\mathcal{N}(0,1)$. Each experiment runs for $1000$ iterations, and the results are averaged over $20$ trials.

\paragraph{Measures and Baseline}
Since the local objectives are convex, we evaluate the learning performance using the best-token and mean-token error measures defined in \eqref{eq:convexbesttoken} and \eqref{eq:convexmeantoken}, respectively.

We use single-token random-walk learning as the baseline. Without information exchange, each token in the $m$-token system follows the same learning dynamics as a single random-walk token. This baseline allows us to assess the performance gains achieved by multiple tokens with information exchange.

\subsection{Comparison of Exchange Policies}
We first compare the two exchange policies, \textsc{Average-Gradient} and \textsc{Average-Model}, for fixed $N$ and $m$. 
Fig.~\ref{fig:policy-comparison} reports the percentage improvement of each policy over the baseline in terms of both the mean-token and best-token losses, defined as
\begin{equation*}
\text{Improvement}(t) = \frac{\mathcal{L}_{\text{base}}(t) - \mathcal{L}_{\text{policy}}(t)}{\mathcal{L}_{\text{base}}(t)} \times 100\%,
\end{equation*}
where $\mathcal{L}_{\text{base}}(t)$ and $\mathcal{L}_{\text{policy}}(t)$ denote the losses of the baseline and the policy at iteration $t$, respectively. A positive value indicates that the policy achieves a lower loss than the baseline. The $x$-axis denotes the number of iterations. 
The two metrics lead to different conclusions: \textsc{Average-Model} performs better in terms of the mean-token loss, whereas \textsc{Average-Gradient} performs better in terms of the best-token loss.

\begin{figure}[htbp]
\centering
\begin{subfigure}[t]{0.49\linewidth}
\centering
\includegraphics[width=\linewidth]{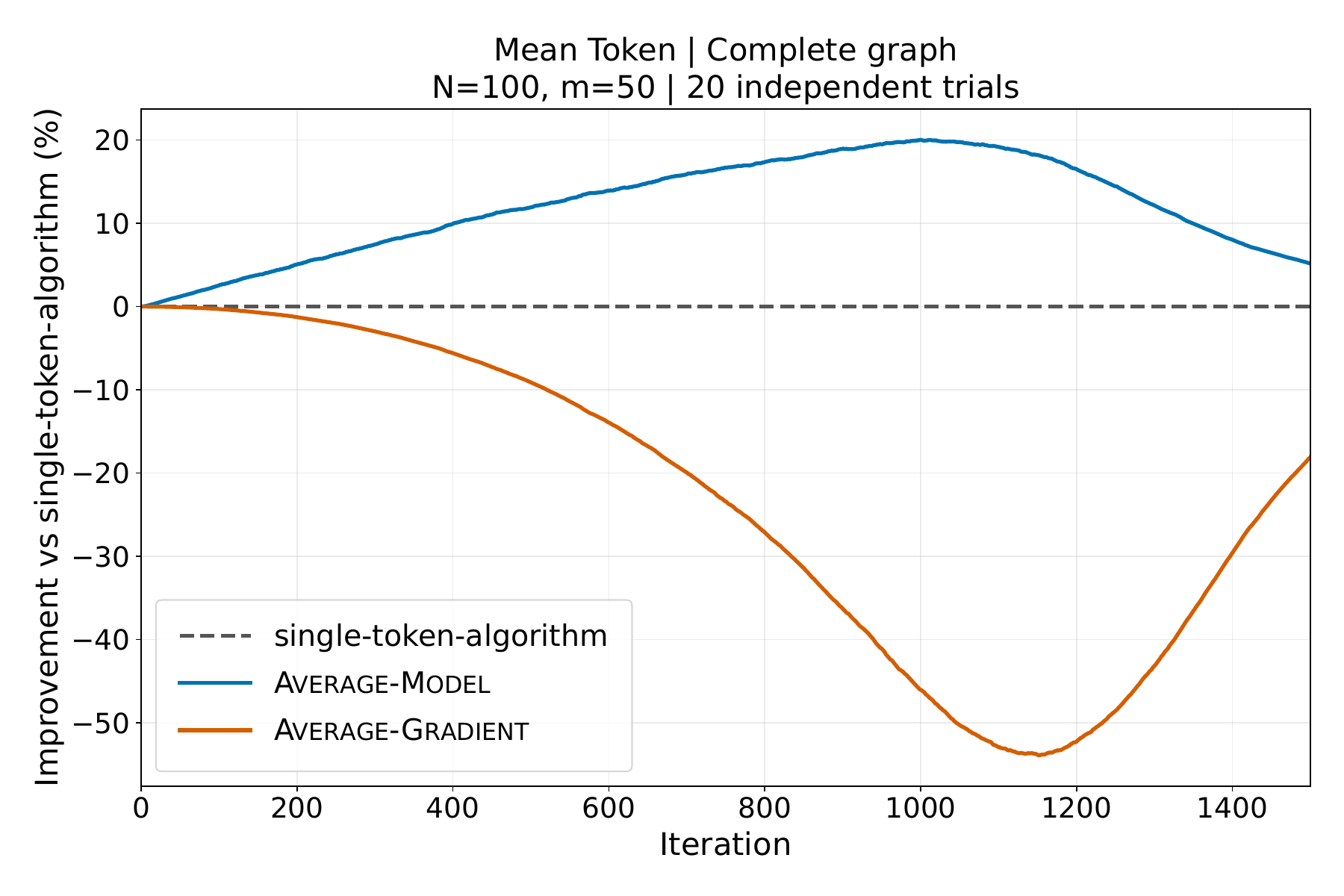}
\caption{Mean-token loss improvement}
\label{fig:policy-mean}
\end{subfigure}
\begin{subfigure}[t]{0.49\linewidth}
\centering
\includegraphics[width=\linewidth]{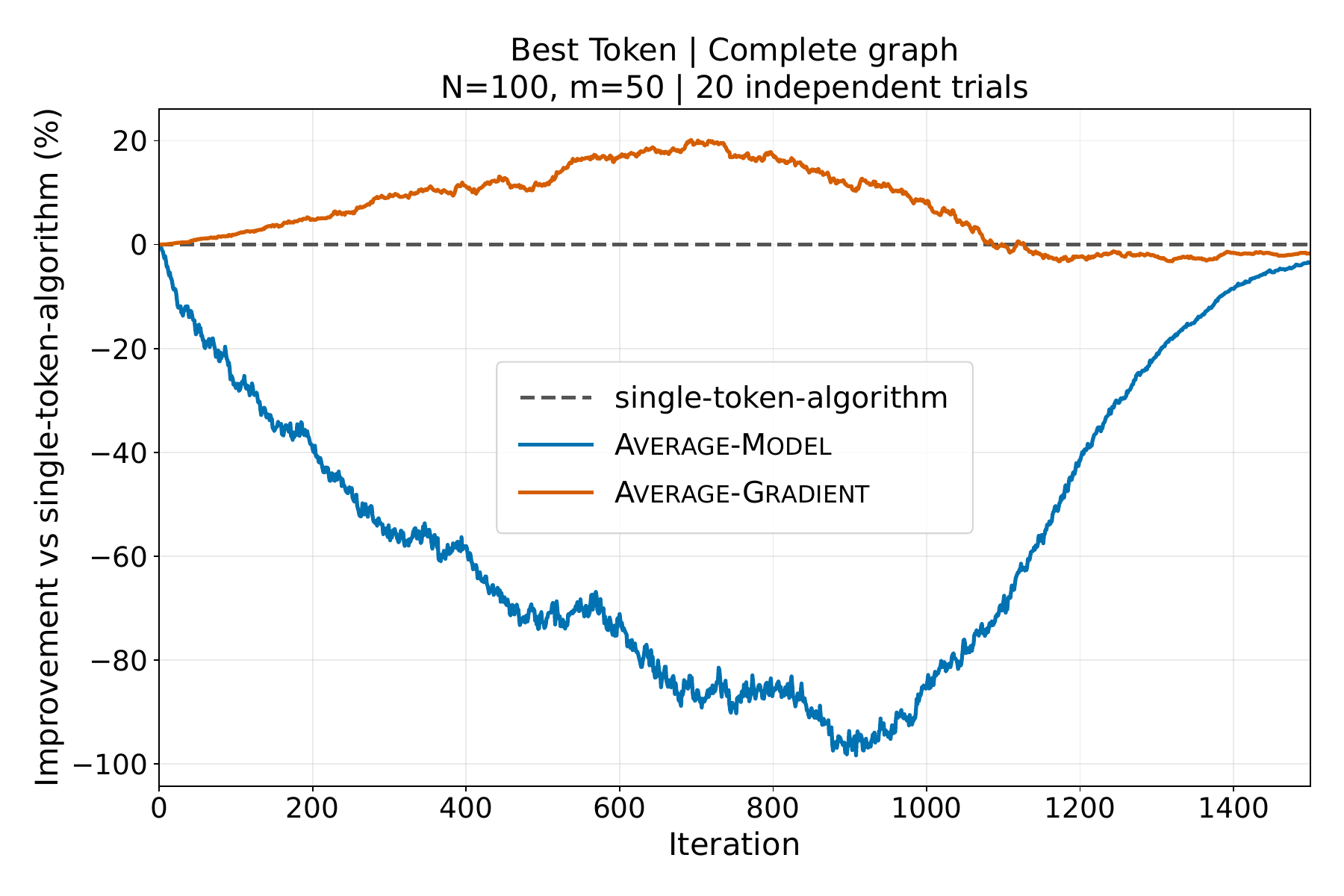}
\caption{Best-token loss improvement}
\label{fig:policy-best}
\end{subfigure}
\caption{Percentage improvement relative to single-token-algoritm for $N=100$ and $m=50$.}
\label{fig:policy-comparison}
\end{figure}%

As shown in Fig.~\ref{fig:policy-comparison}(\subref{fig:policy-mean}), \textsc{Average-Model} achieves a higher improvement than \textsc{Average-Gradient}. One possible explanation is that each token accumulates information along its own random-walk trajectory, and model averaging directly combines the information learned by different tokens. In contrast, \textsc{Average-Gradient} exchanges only the gradients computed at meetings, without directly combining the models accumulated along different trajectories. This difference may explain the better mean-token performance of \textsc{Average-Model}.

The opposite trend is observed in Fig.~\ref{fig:policy-comparison}(\subref{fig:policy-best}), where \textsc{Average-Gradient} achieves a higher improvement than \textsc{Average-Model}. Under \textsc{Average-Model}, a well-performing token may lose some of its accumulated advantage when its model is averaged with a worse-performing token. In contrast, \textsc{Average-Gradient} preserves each token's model while incorporating gradient information from other tokens. This allows a well-performing token to benefit from information exchange without directly replacing its current model, thereby better preserving the advantage of well-performing tokens.

\subsection{Effect of the Network-to-Token Ratio $\frac{N}{m}$}
We next study the effect of the network-to-token ratio $N/m$. As $m$ increases, $N/m$ decreases, leading to more frequent token meetings and information exchange. This allows us to examine how the two exchange policies perform under different levels of token interaction.

\paragraph{\textsc{Average-Gradient}} 
Fig.~\ref{fig:M-Average-gradient} shows that the effect of $N/m$ on \textsc{Average-Gradient} differs between the mean-token and best-token metrics.

For the mean-token loss \big(see Fig.~\ref{fig:M-Average-gradient}({\subref{fig:M-Average-gradient-mean}})\big), performance deteriorates as $N/m$ decreases, i.e., as the number of tokens increases. A smaller $N/m$ leads to more frequent meetings and gradient averaging. However, tokens generally carry different models due to their distinct random-walk trajectories, so their gradients are evaluated at different models. Averaging these gradients more frequently may perturb individual descent directions rather than improve them, potentially explaining the higher mean-token loss at smaller $\frac{N}{m}$.

\begin{figure}[htbp]
\centering
\begin{subfigure}[t]{0.49\linewidth}
\centering
\includegraphics[width=\linewidth]{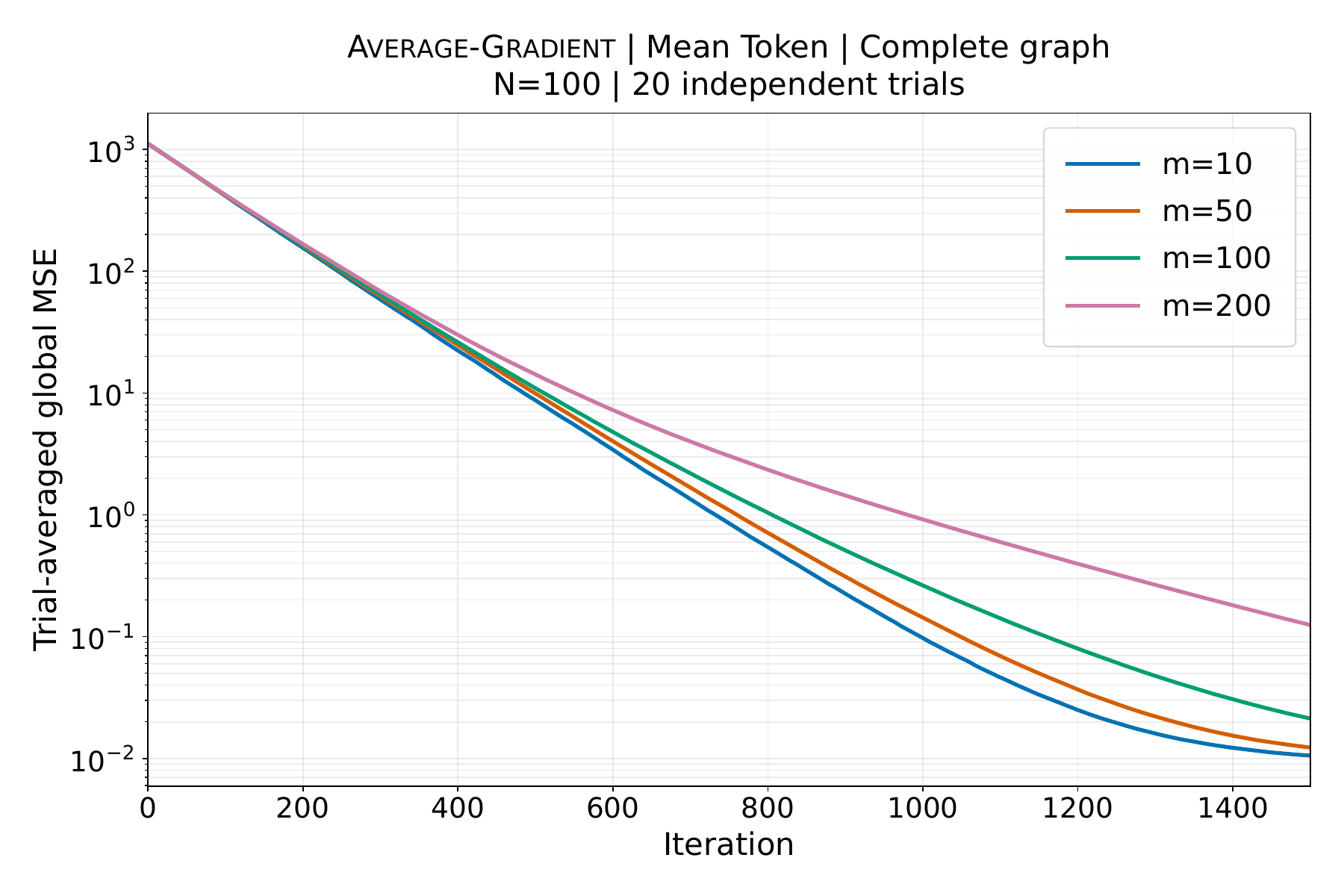}
\caption{Mean-token loss}
\label{fig:M-Average-gradient-mean}
\end{subfigure}
\begin{subfigure}[t]{0.49\linewidth}
\centering
\includegraphics[width=\linewidth]{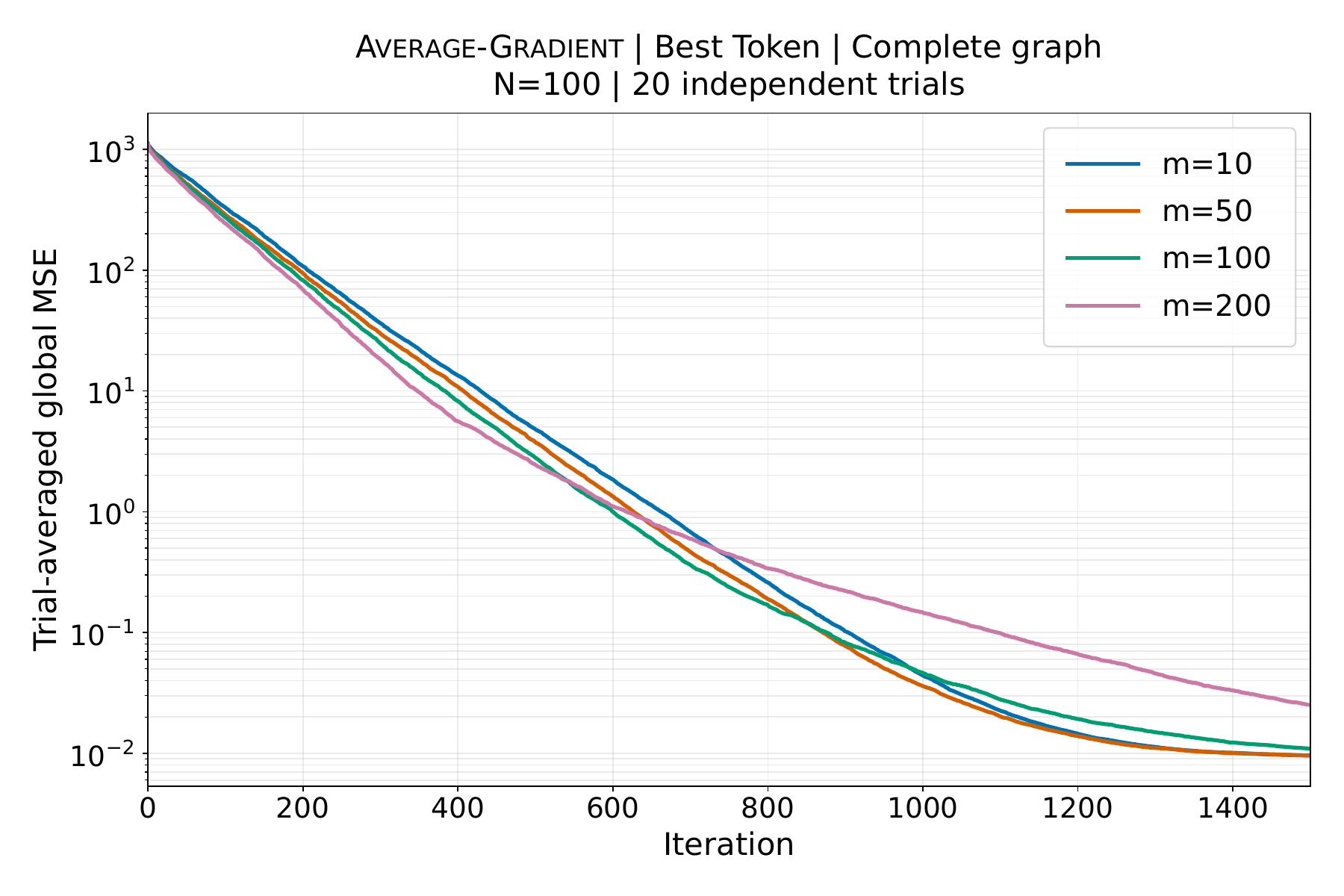}
\caption{Best-token loss}
\label{fig:M-Average-gradient-best}
\end{subfigure}
\caption{Mean-token and best-token global MSE under \textsc{Average-Gradient} for different values of $m$, averaged over $20$ trials.}
\label{fig:M-Average-gradient}
\end{figure}

The best-token loss \big(Fig.~\ref{fig:M-Average-gradient}(\subref{fig:M-Average-gradient-best})\big) exhibits a different two-stage behavior. During the early iterations, decreasing $N/m$ improves best-token performance, as more tokens explore the network in parallel, increasing the likelihood that at least one token achieves a lower loss.

However, this advantage weakens later in training. For $m = 200$, the decrease in best-token loss begins to slow at approximately iteration $600$, while a similar slowdown occurs later for $m = 100$. One possible explanation is that, as well-performing tokens approach a good solution, gradients contributed by tokens at different model states may perturb their descent directions. With more frequent meetings at smaller $\frac{N}{m}$, such interference may arise earlier.

These results suggest a trade-off: decreasing $N/m$ enhances early best-token performance through parallel exploration but may degrade mean-token performance and weaken the best-token advantage later in training due to more frequent gradient averaging.

\paragraph{\textsc{Average-Model}}

A different pattern is observed for \textsc{Average-Model}. As shown in Fig.~\ref{fig:M-Average-model}, both the mean-token and best-token losses remain nearly unchanged as $N/m$ decreases. Unlike \textsc{Average-Gradient}, \textsc{Average-Model} directly combines the models accumulated along different token trajectories at meetings. Once meetings become sufficiently frequent to propagate information across the token population, additional meetings provide limited benefit. In our experiments, even $m = 10$ tokens provide sufficient opportunities for model exchange, so increasing $m$ further yields little additional benefit.

These results suggest that the two policies differ in their sensitivity to $N/m$: \textsc{Average-Gradient} exhibits substantial changes in learning performance as the ratio decreases, whereas \textsc{Average-Model} remains relatively stable over the tested range.

\begin{figure}[htbp]
\centering
\begin{subfigure}[t]{0.49\linewidth}
\centering
\includegraphics[width=\linewidth]{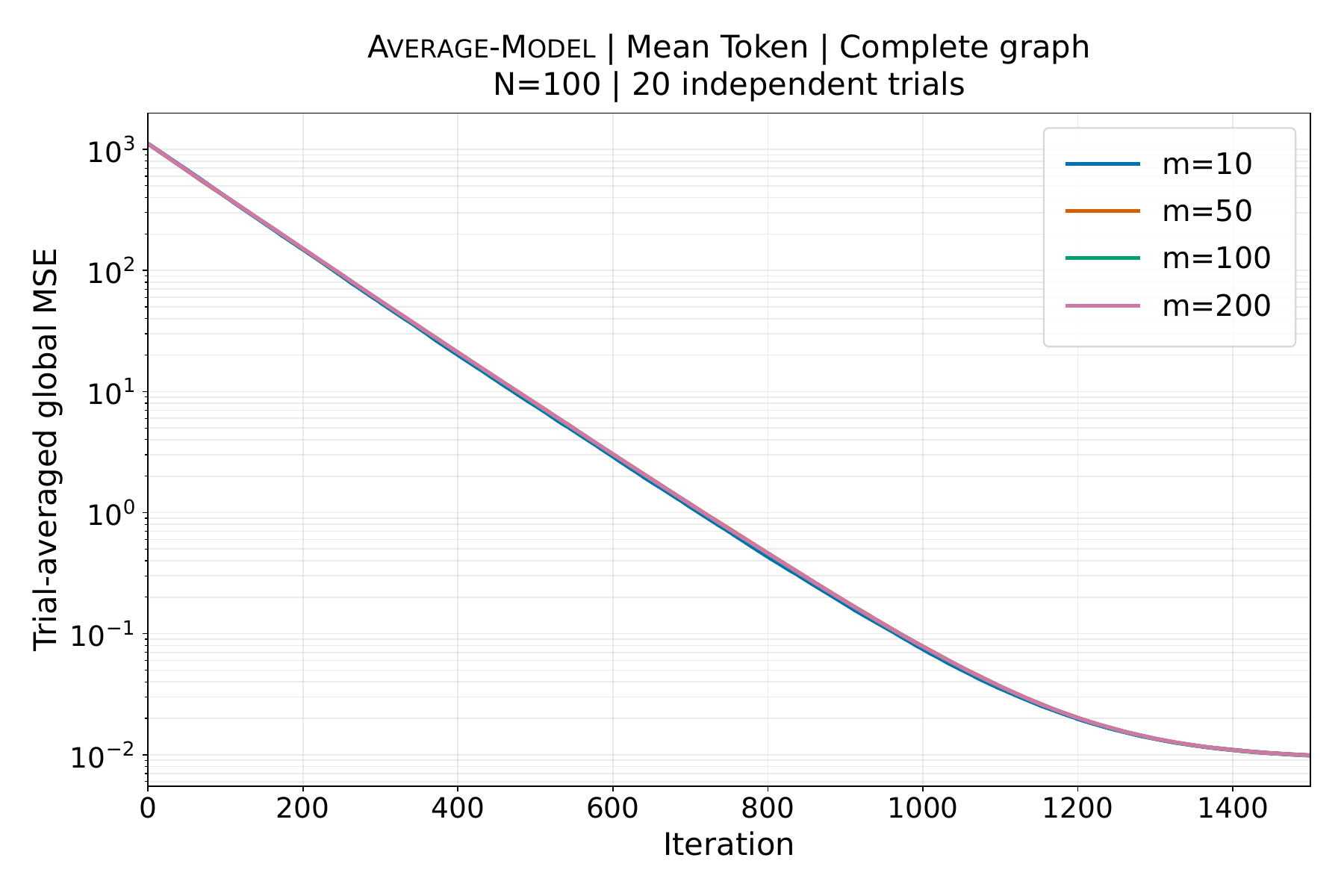}
\caption{Mean-token loss}
\label{fig:M-Average-model-mean}
\end{subfigure}
\begin{subfigure}[t]{0.48\linewidth}
\centering
\includegraphics[width=\linewidth]{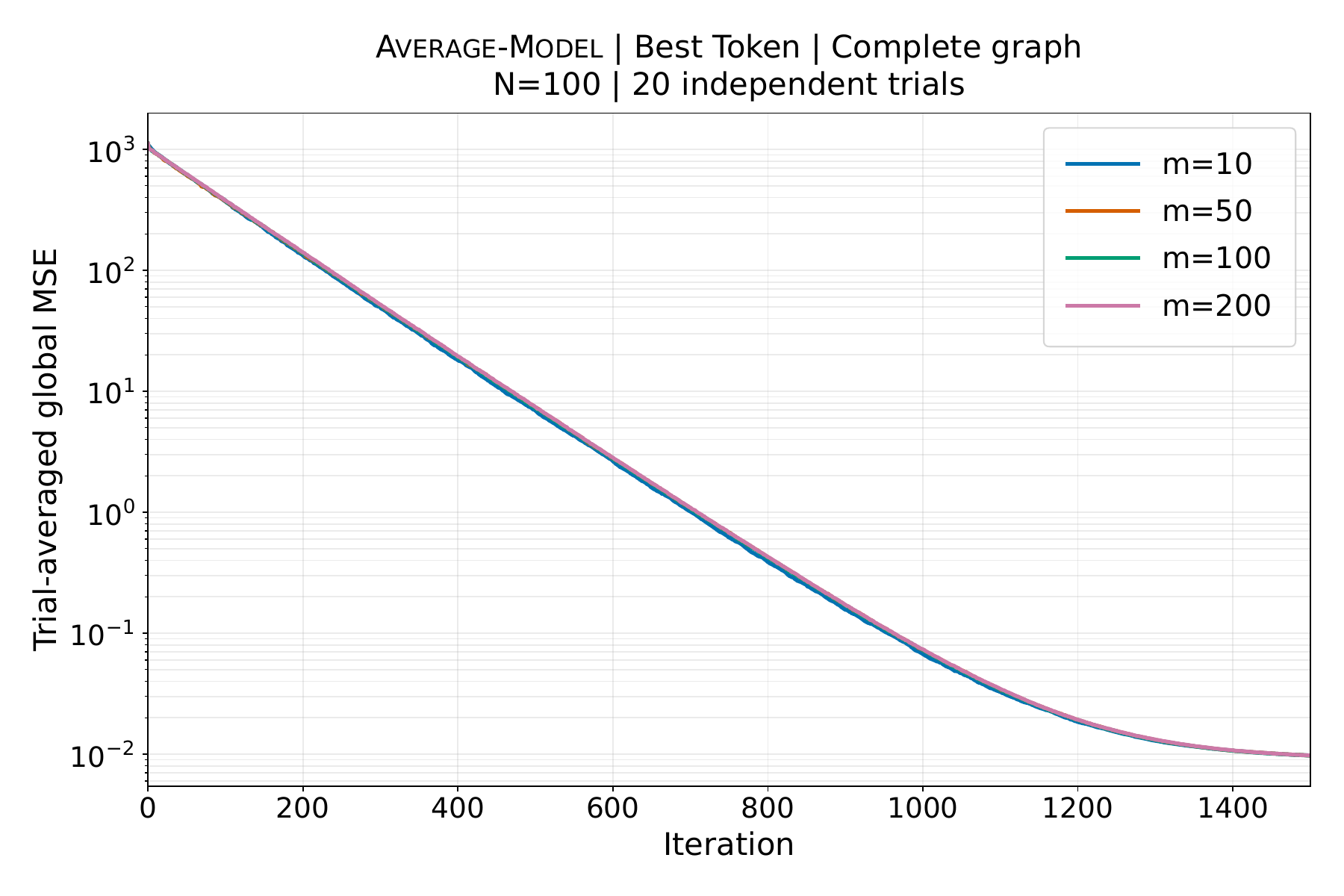}\caption{Best-token loss}
\label{fig:M-Average-model-best}
\end{subfigure}
\caption{Mean-token and best-token global MSE under \textsc{Average-Model} for different values of $m$, averaged over $20$ trials.}
\label{fig:M-Average-model}
\end{figure}%

\section{Conclusion}
In this work, we investigated how meeting-triggered information exchange affects multi-token RW learning. Our results yield two main findings. First, on complete graphs, the iteration complexity lower bounds for interacting multi-token algorithms under the Markov black-box framework scale approximately with the network-to-token ratio $N/m$. Our experiments further show that \textsc{Average-Gradient} is more sensitive to this ratio than \textsc{Average-Model}. Second, the preferred information exchange policy depends on the performance criterion: \textsc{Average-Model} performs better when evaluated across all tokens, whereas \textsc{Average-Gradient} performs better when evaluated by the best-performing token. Future research could extend our theoretical results from complete graphs to general network topologies and establish theoretical guarantees explaining the observed advantages of \textsc{Average-Model} in mean-token performance and \textsc{Average-Gradient} in best-token performance.

\clearpage

\bibliographystyle{IEEEtran}
\bibliography{reference}

@article{chen2026selfcreating,
  title   = {Self-Creating Random Walks for Decentralized Learning
             under Pac-Man Attacks},
  author  = {Chen, Xingran and Parag, P. and Bhagat, R.
             and El Rouayheb, S.},
  journal = {IEEE Journal on Selected Areas in Communications},
  year    = {2026},
  note    = {accepted},
}

@inproceedings{chen2026pacman,
  title     = {Random Walk Learning and the Pac-Man Attack},
  author    = {Chen, Xingran and Parag, P. and Bhagat, R.
               and Liu, Z. and El Rouayheb, S.},
  booktitle = {2026 IEEE International Symposium on Information Theory},
  year      = {2026}
}

@inproceedings{chen2026rwlora,
  title     = {{RW-LoRA}: Communication-Efficient Decentralized
               {LoRA} Fine-Tuning via Random Walks},
  author    = {Chen, Xingran and Bhagat, R. and Ayache, G.
               and Bitar, R. and Gong, Y. and El Rouayheb, S.},
  booktitle = {2026 IEEE Information Theory Workshop},
  year      = {2026}
}

@inproceedings{koloskova2020unified,
  title={A unified theory of decentralized SGD with changing topology and local updates},
  author={Koloskova, Anastasia and Loizou, Nicolas and Boreiri, Sadra and Jaggi, Martin and Stich, Sebastian},
  booktitle={International conference on machine learning},
  pages={5381--5393},
  year={2020},
  organization={PMLR}
}

@inproceedings{even2023stochastic,
  title={Stochastic gradient descent under Markovian sampling schemes},
  author={Even, Mathieu},
  booktitle={International conference on machine learning},
  pages={9412--9439},
  year={2023},
  organization={PMLR}
}

@book{nesterov2018lectures,
  title     = {Lectures on Convex Optimization},
  author    = {Nesterov, Yurii},
  year      = {2018},
  publisher = {Springer},
  address   = {Cham},
  series    = {Springer Optimization and Its Applications},
  volume    = {137},
  doi       = {10.1007/978-3-319-91578-4}
}

@article{carmon2021lower,
  title   = {Lower Bounds for Finding Stationary Points II: First-Order Methods},
  author  = {Carmon, Yair and Duchi, John C. and Hinder, Oliver and Sidford, Aaron},
  journal = {Mathematical Programming},
  volume  = {185},
  pages   = {315--355},
  year    = {2021},
  doi     = {10.1007/s10107-019-01431-x}
}

@collection{lovasz1993random,
  author    = {Lov{\'a}sz, L{\'a}szl{\'o}},
  title     = {Random Walks on Graphs},
  booktitle = {Combinatorics, Paul Erd{\H{o}}s Is Eighty},
  volume    = {2},
  pages     = {1--46},
  year      = {1993}
}

@book{levin2017markov,
  author    = {Levin, David A. and Peres, Yuval},
  title     = {Markov Chains and Mixing Times},
  edition   = {2},
  publisher = {American Mathematical Society},
  year      = {2017}
}

@techreport{page1999pagerank,
  author      = {Page, Lawrence and Brin, Sergey and
                 Motwani, Rajeev and Winograd, Terry},
  title       = {The {PageRank} Citation Ranking:
                 Bringing Order to the Web},
  institution = {Stanford InfoLab},
  year        = {1999}
}

@article{fouss2007random,
  author  = {Fouss, Fran{\c{c}}ois and Pirotte, Alain and
             Renders, Jean-Michel and Saerens, Marco},
  title   = {Random-Walk Computation of Similarities between
             Nodes of a Graph with Application to
             Collaborative Recommendation},
  journal = {IEEE Transactions on Knowledge and Data Engineering},
  volume  = {19},
  number  = {3},
  pages   = {355--368},
  year    = {2007}
}

@inproceedings{backstrom2011supervised,
  author    = {Backstrom, Lars and Leskovec, Jure},
  title     = {Supervised Random Walks: Predicting and
               Recommending Links in Social Networks},
  booktitle = {Proceedings of the Fourth ACM International
               Conference on Web Search and Data Mining},
  pages     = {635--644},
  year      = {2011}
}

@inproceedings{johansson2007peer,
  author    = {Johansson, B. and Rabi, M. and Johansson, M.},
  title     = {A Simple Peer-to-Peer Algorithm for Distributed
               Optimization in Sensor Networks},
  booktitle = {46th IEEE Conference on Decision and Control},
  pages     = {4705--4710},
  year      = {2007}
}

@article{ayache2023walk,
  author  = {Ayache, G. and Dassari, V. and El Rouayheb, S.},
  title   = {Walk for Learning: A Random Walk Approach for
             Federated Learning From Heterogeneous Data},
  journal = {IEEE Journal on Selected Areas in Communications},
  volume  = {41},
  number  = {4},
  pages   = {929--940},
  year    = {2023}
}

@article{ye2020parallel,
  author  = {Ye, Yu and Chen, Hao and Ma, Zheng and Xiao, Ming},
  title   = {Decentralized Consensus Optimization Based on
             Parallel Random Walk},
  journal = {IEEE Communications Letters},
  volume  = {24},
  number  = {2},
  pages   = {391--395},
  year    = {2020},
  doi     = {10.1109/LCOMM.2019.2955442}
}

@inproceedings{hendrikx2023token,
  author    = {Hendrikx, Hadrien},
  title     = {A Principled Framework for the Design and
               Analysis of Token Algorithms},
  booktitle = {Proceedings of the 26th International Conference
               on Artificial Intelligence and Statistics},
  series    = {Proceedings of Machine Learning Research},
  volume    = {206},
  pages     = {470--489},
  publisher = {PMLR},
  year      = {2023}
}

@article{valdeira2026multitoken,
  author  = {Valdeira, Pedro and Chi, Yuejie and
             Soares, Cl{\'a}udia and Xavier, Jo{\~a}o},
  title   = {A Multi-Token Coordinate Descent Method for
             Semi-Decentralized Vertical Federated Learning},
  journal = {IEEE Transactions on Signal Processing},
  volume  = {74},
  pages   = {293--308},
  year    = {2026}
}

@article{mao2020walkman,
  author  = {Mao, Xianghui and Yuan, Kun and Hu, Yubin and
             Gu, Yuantao and Sayed, Ali H. and Yin, Wotao},
  title   = {Walkman: A Communication-Efficient Random-Walk
             Algorithm for Decentralized Optimization},
  journal = {IEEE Transactions on Signal Processing},
  volume  = {68},
  pages   = {2513--2528},
  year    = {2020}
}

@article{hastings1970monte,
  author  = {Hastings, W. K.},
  title   = {Monte Carlo sampling methods using {Markov} chains and their applications},
  journal = {Biometrika},
  volume  = {57},
  number  = {1},
  pages   = {97--109},
  year    = {1970},
  doi     = {10.1093/biomet/57.1.97}
}

\onecolumn
\appendices

\section{Proof for Theorem~\ref{thm:mtoken}}\label{Appex:mtoken}
In this proof, we first consider the case $m=2$ to illustrate the main idea. The argument then extends straightforwardly to the general case of $m$ tokens.

\subsection{Sequential Coordinate Revelation}
Let $u,w\in\cV$ be two designated nodes. Node $u$ is associated with the odd-indexed coordinates, whereas node $w$ is associated with the even-indexed coordinates. The coordinates must be revealed sequentially: coordinate $k$ can be revealed only after coordinates $1,\ldots,k-1$ have already been revealed. 
Thus, after coordinate $k-1$ has been revealed, token $i$ must next visit
\begin{align*}
r_k \coloneqq
\begin{cases}
u, & k \text{ is even},\\
w, & k \text{ is odd},
\end{cases}
\end{align*}
in order to reveal coordinate $k$.  In other words, the token must successively visit $u,w,u,w,\ldots$ to reveal coordinates $1,2,3,4,\ldots$, respectively.

We define 
\begin{align*}
\mathcal{H}_k(t)\coloneqq\left\{j :\text{token } j \text{ has acquired coordinates }1,\ldots,k\text{ by time }t\right\}.
\end{align*}
For each token $i$, let $T_k^{(i)}$ denote the first time at which token $i$ has access to the first $k$ coordinates. Set  $T_{0}^{(i)} \coloneqq 0$.
\begin{align*}
T_k^{(i)}\coloneqq\inf\left\{t \ge T_{k-1}^{(i)}: v_t^{(i)} = r_k\text{ or }\exists j \in \cH_k(t)\text{ such that }v_t^{(j)} = v_t^{(i)}\right\},
\quad k \ge 1.
\end{align*}
Based on the definition of $T_k^{(i)}$, we denote
\begin{equation}
k^{(i)}(t) \coloneqq \sup \bigl\{ k \ge 0 : T_k^{(i)} \le t \bigr\}.
\end{equation}
By the definition of $k^{(i)}(t)$, we have $k^{(i)}(t)\le t$. In addition, $\{k^{(i)}(t) \ge k\} \Leftrightarrow \{T_k^{(i)} \le t\}$.

\begin{align*}
\mathbb{E}\bigl[k^{(i)}(t)\bigr] = \sum_{0 \le k \le t} \mathbb{P}\bigl(k^{(i)}(t) \ge k\bigr) = \sum_{0 \le k \le t} \mathbb{P}(T_k^{(i)} \le t).
\end{align*}

\subsection{Bounding Acquisition-Time Probabilities Using the Minimum}\label{app:acquisition-time-bound}

We first consider the time required for either of the two tokens to have access to successive coordinates. Define
\begin{align*}
S_\ell \coloneqq\min\left\{T_\ell^{(i)},T_\ell^{(j)}\right\},
\end{align*}
which is the first time at which either token has access to the first $\ell$ coordinates. We set $S_0\coloneqq 0$. Then,
\begin{align}\label{eq:UBTk}
\Pr\bigl(T_k^{(i)} \le t\bigr)\le\Pr\bigl(S_k \le t\bigr)=\Pr\Biggl(\sum_{\ell=1}^{k}\bigl(S_\ell-S_{\ell-1}\bigr)\le t\Biggr)=\Pr\Biggl(\sum_{\ell=1}^{k}Z_\ell(r_\ell,X_{\ell-1})\le t\Biggr),
\end{align}
where
\begin{align*}
Z_\ell(r_\ell,X_{\ell-1})\coloneqq S_\ell-S_{\ell-1}.
\end{align*}
Here, $r_\ell \in \cV$ denotes the node associated with the $\ell$-th coordinate. At time $S_\ell$, one of the two tokens is the first to have access to the $\ell$-th coordinate. We let $X_\ell\in\cV$ denote the position of the other token at that time. Specifically,
\begin{align*}
X_\ell \coloneqq
\begin{cases}
v_{S_\ell}^{(j)},& T_{\ell}^{(i)} \le T_{\ell}^{(j)},\\[1mm]
v_{S_\ell}^{(i)},& T_{\ell}^{(j)} < T_{\ell}^{(i)}.
\end{cases}
\end{align*}

\subsection{Bounding the Waiting Time $Z_\ell$ Using $G_\ell$}
At the beginning of stage $\ell$, the two tokens are located at $r_{\ell-1}$ and $X_{\ell-1}$. By the law of total probability,
\begin{align*}
\Pr(Z_\ell>s)&=\Pr(Z_\ell>s\mid X_{\ell-1}\neq r_\ell)\Pr(X_{\ell-1}\neq r_\ell)\\
&+\Pr(Z_\ell>s\mid X_{\ell-1}=r_\ell)\Pr(X_{\ell-1}=r_\ell).
\end{align*}

Let $G_\ell=\Pr(Z_\ell>s\mid X_{\ell-1}\neq r_\ell)$, we then have 
\begin{equation}\label{eq:no-start-on-target}
\Pr(Z_\ell>s)\geq\Pr(G_\ell>s).
\end{equation}
In a complete graph, by the Metropolis-Hastings algorithm \cite{hastings1970monte}, the transition probability matrix is uniform. From any non-target node $x \neq r_\ell$, a token reaches the target node $r_\ell$ in one step with probability $p=\frac{1}{N-1}$. Since the two tokens move independently, the probability that at least one token reaches $r_\ell$ in a given step is
\begin{align*}
q = 1-(1-p)^2=1-\left(1-\frac{1}{N-1}\right)^2.
\end{align*}
Hence, for any realization $X_\ell=x$, $G_\ell(r_\ell,x)\sim \operatorname{Geom}(q)$. It follows that 
\begin{align*}
\Pr\bigl(G_\ell(r_\ell,x)>s\bigr)=(1-q)^s.
\end{align*}

\noindent
For every realization $\mathbf{x}$, averaging over all possible trajectories gives 
\begin{align*}
\Pr\left(G_\ell(r_\ell,X_\ell)>s\right)&=\sum_{\mathbf{x}}\Pr\left(G_\ell>s\mid X_{\ell}=\mathbf{x}\right)\Pr\left(X_{\ell}=\mathbf{x}\right)\\
&=\sum_{\mathbf{x}}(1-q)^s\Pr\left(X_\ell=\mathbf{x}\right)\\
&=(1-q)^s.
\end{align*}
Moreover, since the same parameter $q$ applies to every stage $\ell$,
the random variables $G_\ell$ are identically distributed for all $\ell$.
Equivalently,
\begin{align*}
G_0 \overset{d}{=} G_1 \overset{d}{=} \cdots \sim \operatorname{Geom}(q).
\end{align*}
\subsection{Independent and Identically Distributed Waiting Times $G_\ell$}\label{app:two-token-independence}
To further establish independence, Let
\begin{align*}
\mathcal{F}_t\coloneqq\sigma\!\left(v_s^{(i)},v_s^{(j)}:0\le s\le t \right),
\end{align*}
where $v_s^{(i)}$ and $v_s^{(j)}$ denote the positions of the two tokens at
time $s$.
By the strong Markov property, conditioned on
$\mathcal F_{\ell}$, the evolution after $\ell$ depends only on the
current configuration $(r_\ell,X_\ell)$.
Hence,
\[
\mathbb P\left(
G_{\ell+1}>s
\mid
\mathcal F_{\ell}
\right)
=
\mathbb P_{}
\left(
G_{\ell+1}>s \mid  r_\ell,X_\ell
\right).
\]
For the complete graph, the right-hand side is independent
of $X_\ell$ and, by the argument above, we then have 
\[
\mathbb P_{}
\left(
G_{\ell+1}>s \mid  r_\ell,X_\ell
\right)
=
(1-q)^s
=
\mathbb{P}\left(
G_{\ell+1}>s
\right).
\]
Thus, $G_{\ell+1}$ is independent of $\mathcal{F}_{S_\ell}$. S
we have
\[
G_{\ell+1}\perp (G_1,\ldots,G_\ell).
\]
Hence, together with the identical distribution established above,
\[
G_\ell \overset{\mathrm{i.i.d.}}{\sim} \operatorname{Geom}(q).
\]
\subsection{Bound on the Number of Exchanges}
Next, we use Chebyshev's inequality to derive a probability bound. Since
$G_\ell \sim \operatorname{Geom}(q)$, we have
\[
\mathbb{E}[G_\ell]=\frac{1}{q},
\qquad
\operatorname{Var}(G_\ell)=\frac{1-q}{q^2}.
\]

Since $\{G_\ell\}$ are i.i.d.,
\[
\mathbb{E}\left[\sum_{\ell=1}^{k} G_\ell\right]
=
k\mathbb{E}[G_1],
\qquad
\operatorname{Var}\left(\sum_{\ell=1}^{k} G_\ell\right)
=
k\operatorname{Var}(G_1).
\]

Therefore, for $k\mathbb{E}[G_1]>t$,
\begin{align}
\mathbb{P}\left(
\sum_{\ell=1}^{k} G_\ell \le t
\right)
&=
\mathbb{P}\left(
\sum_{\ell=1}^{k} G_\ell
-
k\mathbb{E}[G_1]
\le
t-k\mathbb{E}[G_1]
\right)
\notag\\
&\le
\mathbb{P}\left(
\left|
\sum_{\ell=1}^{k} G_\ell
-
k\mathbb{E}[G_1]
\right|
\ge
k\mathbb{E}[G_1]-t
\right)
\notag\\
&\le
\frac{
k\operatorname{Var}(G_1)
}{
\left(
k\mathbb{E}[G_1]-t
\right)^2
}.
\end{align}

We then have:
\begin{align}\label{eq: bound0}
    \mathbb{E}[k^{(i)}(t)] &\leq \sum_{0\leq k\leq t}\mathbb{P}\big(\sum_{\ell=1}^{k} G_\ell \le t \big)\nonumber \\
    &= \sum_{0\leq k\leq 2tq}\mathbb{P}\big(\sum_{\ell=1}^{k} G_\ell \le t \big) + \sum_{2tq\leq k\leq t}\mathbb{P}\big(\sum_{\ell=1}^{k} G_\ell \le t \big)\nonumber\\
    &\leq2tq + \sum_{2tq\leq k\leq t} \frac{k\operatorname{Var}(G_1)}{\left(k\mathbb{E}[G_1]-t\right)^2}\nonumber\\
    &=2tq + \sum_{2tq\leq k\leq t}\frac{k(1-q)}{(k-tq)^2}. 
\end{align}

We split the sum at $2tq$ so that $k-tq \geq k/2$ in the tail sum, allowing us to bound the second term using Chebyshev's inequality.
We can further bound for second term in \eqref{eq: bound0}.
\begin{align}\label{eq:Bound1}
\sum_{2tq \le k \le t} \frac{k(1-q)}{(k - tq)^2} 
&= (1-q) \sum_{2tq \le k \le t} \frac{k}{(k - tq)^2}\nonumber \\
&= (1-q) \sum_{0 \le u \le t - 2tq} \frac{u + 2tq}{(u + tq)^2} \nonumber\\
&= (1-q) \left( \sum_{0 \le u \le t - 2tq} \frac{u}{(u + tq)^2} + \sum_{0 \le u \le t - 2tq} \frac{2tq}{(u + tq)^2} \right)
\end{align}

Let $u+tq = k-tq$. we then bound for first term in \eqref{eq:Bound1}.
\begin{alignat*}{2}
\sum_{0 \le u \le t - 2tq} \frac{u}{(u + tq)^2} &\le \sum_{0 \le u \le t} \frac{u}{(u + tq)^2} 
&&\le \sum_{1 \le u \le t} \frac{1}{u + tq} \\
&\le \int_{0}^{t} \frac{1}{u + tq} \, du 
&&= \ln(u + tq) \;\Big|_{0}^{t - 2tq} \\
&= \ln(t - tq) - \ln(tq) 
&&= \ln \frac{1 - q}{q}.
\end{alignat*}
The second inequality is because $u/(u+tq) \leq1$ and for $u=0$, we have zero value so we start from index $1$. 

Finally we bound for second term in \eqref{eq:Bound1}. 
\begin{alignat*}{2}
\sum \frac{2tq}{(u + tq)^2} 
&= 2 \sum \frac{tq}{(u + tq)^2} 
&\quad &\le 2 \sum_{u=0}^{\infty} \frac{tq}{(u + tq)^2} \\[1em]
&= 2 \biggl( \frac{1}{tq} + \sum_{u=1}^{\infty} \frac{tq}{(u + tq)^2} \biggr) 
&\quad &\le 2 \biggl( \frac{1}{tq} + \int_{0}^{\infty} \frac{tq}{(u + tq)^2} \, du \biggr) \\[1em]
&&&= 2 \biggl( \frac{1}{tq} + 1 \biggr)
\end{alignat*}

Combine all result we have, we can get:
\begin{align*}
\E[k^{(i)}(t)] 
&\le 2tq + (1-q) \left[ \ln \frac{1-q}{q} + \frac{2}{tq} + 2 \right]
\end{align*}

Let
$R(t,q)
=
(1-q)
\left[
\ln\left(\frac{1-q}{q}\right)
+2+\frac{2}{tq}
\right]$, we have 
\begin{align}\label{eq:UBk}
\E[k^{(i)}(t)]\le 2tq + R(t,q).
\end{align}

\subsection{Extension to $m$ Tokens}
The proof for multiple tokens follows the same argument as in the two-token case. We define
\begin{align*}
\alpha_\ell:= \min_{j\in[m]} \set{T_\ell^{(j)}}.
\end{align*}
Similar to \eqref{eq:UBTk}, we have
\begin{align}
\mathbb{P}\bigl(T_k^{(i)} \le t\bigr)&\le\mathbb{P}\bigl(\alpha_k \le t\bigr)=
\mathbb{P}\Biggl(
\sum_{\ell=1}^{k}
\bigl(\alpha_\ell-\alpha_{\ell-1}\bigr)
\le t
\Biggr) \notag\\
&=\mathbb{P}\Biggl(\sum_{\ell=1}^{k}\beta_\ell\le t\Biggr) \le \mathbb{P}\Biggl(\sum_{\ell=1}^{k}\gamma_\ell\le t\Biggr).
\end{align}
where
$\beta_\ell = \alpha_\ell-\alpha_{\ell-1}$. Following the same argument as in~\eqref{eq:no-start-on-target},
we define
\begin{align*}
A_\ell:=\bigcap_{i=1}^{m}\left\{v_{S_{\ell-1}}^{(i)}\neq r_\ell\right\},\qquad\gamma_\ell(s):=\Pr(\beta_\ell>s\mid A_\ell).
\end{align*}
By an argument similar to that for the two-token case in Appendix~\ref{app:two-token-independence}, the symmetry of the random walks implies that \(\{\gamma_\ell\}_{\ell=1}^k\) are independent and identically distributed. In particular,
\[\gamma_\ell\overset{\mathrm{i.i.d.}}{\sim}\operatorname{Geom}(q),\quad \text{where }q=1-\left(1-\frac{1}{N-1}\right)^m.
\]


\noindent
Therefore, following the same analysis in \eqref{eq:UBk}, we obtain
\begin{align*}
\mathbb{E}[k^{(i)}(t)] \le 2tq + R(t,q),
\end{align*}
where
\begin{align*}
R(t,q) = (1-q)\left[\ln\left(\frac{1-q}{q}\right) + 2 + \frac{2}{tq}\right].
\end{align*}
To express this bound in terms of $N$ and $m$, let $p = 1/(N-1)$. By Bernoulli's inequality, $(1-p)^m \geq 1-mp$, and hence
\begin{align*}
q = 1-(1-p)^m \leq mp = \frac{m}{N-1}
\end{align*}
holds for all $m \geq 1$. Substituting this into the leading term, we get
\begin{align*}
\mathbb{E}[k^{(i)}(t)] \le \frac{2tm}{N-1} + R(t,q).
\end{align*}
Since $R(t,q)$ is bounded in $t$ for fixed $N$ and $m$, the leading term $2tm/(N-1)$ dominates for large $t$.
For fixed $N$ and $m$, the correction term $R(t,q)$ is bounded with respect to $t$, i.e., $R(t,q) = O(1)$. Following~\cite{even2023stochastic}, the gradient for non-convex objectives satisfies $\mathbb{E}\!\left[\|\nabla f(x_t)\|^2\right] \geq \frac{L\Delta}{16\,\mathbb{E}[k^{(i)}(t)]^2}$. Substituting $\mathbb{E}[k^{(i)}(t)] \leq \frac{2tm}{N-1} + R(t,q)$, we obtain
\begin{align*}
\mathbb{E}\!\left[\|\nabla f(x_t)\|^2\right]
&\geq \frac{L\Delta}{16\left(\frac{2tm}{N-1} + R(t,q)\right)^2} \\
&= \frac{L\Delta(N-1)^2}{64m^2t^2}\left(1+\frac{(N-1)R(t,q)}{2tm}\right)^{-2} \\
&= \frac{L\Delta(N-1)^2}{64m^2t^2}\left(1+O\left(\frac{1}{t}\right)\right)^{-2}.
\end{align*}

Following a similar argument to the non-convex case, together with \cite[Theorem 2.1.7]{nesterov2018lectures} and \cite[Theorem 2.1.13]{nesterov2018lectures}, we obtain the corresponding lower bounds for the convex and strongly convex cases, respectively.

\section{Proof of Corollary~\ref{cor:ExtendT}}\label{Appe:ExtendT}
For simplicity, let $E_t^{(i)}$ denote the individual-token error, i.e., $E_t^{(i)}=\|\nabla f(x_t^{(i)})\|^2$ in the non-convex case and $E_t^{(i)}=f(x_t^{(i)})-f(x^\star)$ in convex and strongly convex cases.

We first consider the best-token criterion. Since $\min_{j\in[m]} E_t^{(j)}\le E_t^{(i)}$ for any $i\in[m]$, whenever token $i$ reaches accuracy $\epsilon$, the best-token criterion is also satisfied. Hence, $T_\epsilon^{\text{best}}\le T_\epsilon^{(i)}$ for any $i\in[m]$. Thus, the corresponding iteration complexity bounds described in Remark~\ref{remark:UPT} applies to $T_\epsilon^{\text{best}}$.

We next consider the mean-token criterion. Note that 
\begin{align*}
\frac{1}{m}\sum_{j\in[m]}f(x_t^{(j)}) - f(x^\star) =& \frac{1}{m}\sum_{j\in[m]}\left(f(x_t^{(j)}) - f(x^\star)\right)\\
=& \frac{1}{m}\sum_{j\in[m]} E_t^{(j)}.
\end{align*}
From \eqref{eq:nonconvexbesttoken}, for the non-convex case, the mean-token criterion is also given by $\frac{1}{m}\sum_{j\in[m]} E_t^{(j)}$.
Thus, in all three cases, the mean-token criterion is the average of the individual-token errors. Since the error bounds in \eqref{eq:nonconvexerror}, \eqref{eq:convexerror} and \eqref{eq:sconvexerror} hold for ever $j\in[m]$, the same bounds also hold for their average. Consequently, the corresponding iteration complexity bounds described in Remark~\ref{remark:UPT} also applies to $T_\epsilon^{\text{mean}}$.

\numberwithin{equation}{section}
\renewcommand{\theequation}{\Alph{section}.\arabic{equation}}

\end{document}